\documentclass{article}

\usepackage{xcolor}
\usepackage{amsmath} 
\usepackage{graphicx}
\usepackage{subcaption}
\usepackage{graphicx}
\usepackage{wrapfig}

\usepackage{booktabs}
\usepackage{graphicx}
\usepackage[table]{xcolor}
\usepackage{multirow}

\definecolor{groupgray}{RGB}{245,245,245}
\definecolor{oursblue}{RGB}{232,242,255}
\definecolor{bestgreen}{RGB}{0,128,0}

\newcommand{\best}[1]{\textbf{#1}}
\newcommand{\second}[1]{\underline{#1}}

\definecolor{toolcallcolor}{HTML}{7B1FA2}
\definecolor{toolrespcolor}{HTML}{1565C0}
\definecolor{answercolor}{HTML}{2E7D32}
\definecolor{toolblockcolor}{HTML}{EF6C00}

\newcommand{\ctt}[2]{%
  \ifmmode
    \text{\textcolor{#1}{\texttt{#2}}}%
  \else
    \textcolor{#1}{\texttt{#2}}%
  \fi
}

\newcommand{\ToolsTok}{\ctt{toolblockcolor}{<tools>...</tools>}}
\newcommand{\ToolCallTok}{\ctt{toolcallcolor}{<tool\_call>}}
\newcommand{\EndToolCallTok}{\ctt{toolcallcolor}{</tool\_call>}}
\newcommand{\ToolRespTok}{\ctt{toolrespcolor}{<tool\_response>}}
\newcommand{\EndToolRespTok}{\ctt{toolrespcolor}{</tool\_response>}}
\newcommand{\AnswerTok}{\ctt{answercolor}{<answer>}}
\newcommand{\EndAnswerTok}{\ctt{answercolor}{</answer>}}
 \usepackage[preprint]{neurips_2026}

\usepackage[utf8]{inputenc} 
\usepackage[T1]{fontenc}    
\usepackage{hyperref}       
\usepackage{url}            
\usepackage{booktabs}       
\usepackage{amsfonts}       
\usepackage{nicefrac}       
\usepackage{microtype}      
\usepackage{xcolor}         

\usepackage{amsmath}
\usepackage{amssymb}
\usepackage{mathtools}
\usepackage{amsthm}
\usepackage[normalem]{ulem}
\useunder{\uline}{\ul}{}
\usepackage[capitalize,noabbrev]{cleveref}
\theoremstyle{plain}
\newtheorem{theorem}{Theorem}[section]
\newtheorem{proposition}[theorem]{Proposition}

\theoremstyle{definition}

\theoremstyle{remark}

\title{Mind the Tool Failures: Achieving Synergistic Tool Gains for Medical Agents}
\author{%
  Yunhui Gan$^{1,2,3}$\thanks{Equal contribution.},~
  Tan Pan$^{1,2,5}$\footnotemark[1],~
  Kaiyu Guo$^{4,2}$\thanks{Corresponding authors.},~
  Limei Han$^{1,2}$ \\
  \textbf{Weimiao Yu$^{5}$,~
  Guangnan Ye$^{1,3}$,~
  Chen Jiang$^{1,2}$\footnotemark[2],~
  Yuan Cheng$^{1,2,3}$\footnotemark[2]} \\
  \textsuperscript{1} Fudan University\quad
  \textsuperscript{2} Shanghai Academy of Artificial Intelligence for Science \\
  \textsuperscript{3} Shanghai Innovation Institute \quad
  \textsuperscript{4} The University of Queensland \\
    \textsuperscript{5} Bioinformatics Institute (BII),
    Agency for Science, Technology and Research (A*STAR)\\
  {\tt\small \{yhgan23,pant23\}@m.fudan.edu.cn} \\
  {\tt\small kaiyu.guo@uq.edu.au, jiangchen@sais.org.cn, cheng\_yuan@fudan.edu.cn}
}

\begin{document}

\maketitle

\begin{abstract}


Medical AI agents increasingly use external tools for diagnosis, treatment recommendation, and evidence retrieval, yet most existing approaches assume that task-appropriate tools are reliable within their intended scope. This assumption is fragile in real clinical settings, where even relevant tools may fail on challenging instances and lead to unsafe downstream decisions. To address this issue, we study medical tool use under imperfect-tool settings to correct failure instances missed by individual tools. Instance-dependent failure patterns create a gap between the best fixed single tool and an ideal instance-wise selector, which we refer to as the Single-Oracle risk gap. The core challenge is that conventional task-level tool selection cannot realize this gap, as it is inherently bounded by the performance of the best single tool. Motivated by this observation, we therefore account for instance-level heterogeneity and formulate tool use as an instance-level selection problem. Particularly, we propose a GRPO-based reinforcement learning framework with rewards for probabilistic risk minimization and disagreement-aware synergy learning, which promotes instance-level correction of erroneous tool consensus. Furthermore, an entropy-guided sampling strategy is adopted to upweight high-disagreement instances, which provide stronger signals for learning instance-specific tool synergy. These two components complement each other in mitigating instance-level heterogeneity and improving tool synergy. Experiments on two tasks and seven medical benchmarks show that our method consistently achieves robust and stable improvements over a broad range of baselines, highlighting the importance of synergy-aware tool use for reliable medical agentic systems.

\end{abstract}


\section{Introduction}

Recent advances in agentic systems have increasingly emphasized the ability of agents to use, select, and evolve external tools~\cite{yao2022react,schick2023toolformer,fallahpour2025medrax}. Existing studies~\cite{shen2023hugginggpt,qian2025toolrl} typically focus on identifying the appropriate tool for a given task or decomposing complex tasks into calls to tools with distinct functional roles. This paradigm has enabled agents to extend their capabilities and solve tasks that require external computation, retrieval, or specialized operations~\cite{paranjape2023art,wang2025opencua}.


While effective in many settings, existing tool-use paradigms often assume that tools are reliable within their intended scope, overlooking failures caused by instance-level heterogeneity. This issue is especially critical in medical agents, where tool failures may lead to unsafe downstream decisions~\cite{habli2020artificial,fallahpour2025medrax}. For example, in chest X-ray diagnosis, a tool that detects cardiomegaly may fail on opacity or pleural effusion, while another tool may exhibit the opposite pattern. Such non-overlapping failures stem from heterogeneous clinical instances and tool-specific biases induced by different architectures and training data. As a result, single-tool reliance is risky, but these heterogeneous failure patterns also reveal potential complementarity across tools. As shown in Figure~\ref{fig:fig1}(a), each tool uniquely solves instances missed by others, indicating that no single tool is uniformly dominant and that exploiting instance-level complementarity can recover additional correct predictions beyond the best individual tool.  We visualize the tool's marginal Oracle\footnote{An Oracle is an idealized evaluator that knows the correct answer and is used to estimate the best possible performance among the available candidates~\cite{malmasi2015oracle}.} accuracy on datasets CheXpert~\cite{irvin2019chexpert} and MIMIC-CXR~\cite{johnson2019mimic}. As provided in Figure~\ref{fig:fig1} (b), the Oracle accuracy increases steadily as more complementary tools are added, substantially exceeding the best single-tool performance. This further indicates that the tool set contains instance-level complementarity and that adding tools with distinct failure patterns can progressively improve the Oracle upper bound.


 Instance-dependent failure patterns create a gap between the best fixed single tool and an ideal instance-wise selector. Prior task-level tool selection methods~\cite{yao2022react,qian2025toolrl, fallahpour2025medrax} focus on aggregate tool performance, overlooking instance-level heterogeneity within the task distribution. As shown in Figure~\ref{fig:fig1}, consequently, it fails to fully realize the potential gains indicated by the Oracle performance, motivating an instance-level policy that exploits tool complementarity for each individual instance.

Based on these observations, we analyze tool behavior from both aggregate- and instance-level perspectives. We define the Single-Oracle risk gap to characterize the performance difference between the best single-tool policy and the Oracle tool set. Motivated by this risk gap, we propose CSRL, a Collaborative Synergy Reinforcement Learning framework built upon GRPO~\cite{shao2024deepseekmath}. CSRL optimizes rewards designed for probabilistic risk minimization and disagreement-aware synergy learning. Specifically, we introduce a Brier reward for probabilistic risk minimization under the Brier loss, together with an override reward that encourages correct decisions in disagreement instances and promotes synergy learning. Furthermore, we introduce Entropy-Guided Sampling, which constructs heterogeneous training batches by upweighting high-disagreement instances, thereby providing stronger supervision for instance-specific synergy. Experimental results of CSRL show that constructed tool sets can achieve synergistic gains beyond the strongest standalone tool. Tools with lower average performance may still provide valuable marginal contributions by covering instances missed by stronger tools. Meanwhile, performance improvements exhibit diminishing returns as the tool set grows, indicating a tradeoff between gains and inference cost. 

\begin{figure}[t]
\centering
\includegraphics[width=\textwidth]{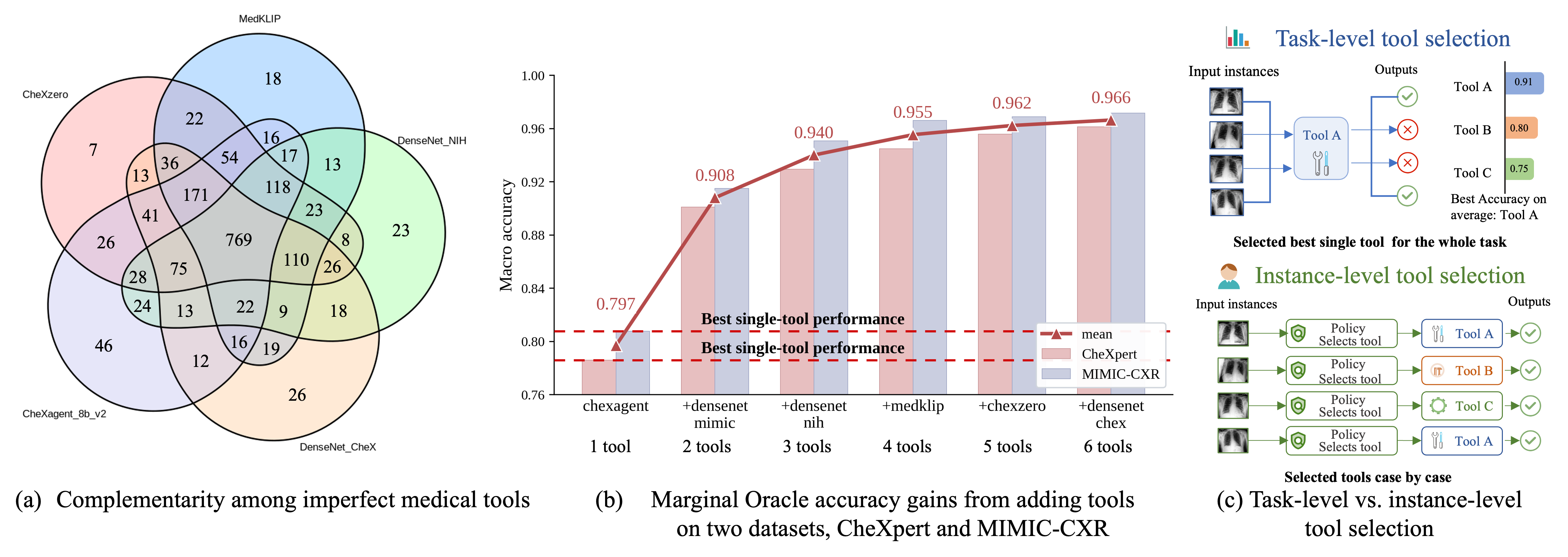}
\caption{(a) Analysis of tool complementarity, where each number indicates the number of instances correctly identified by the corresponding tools;(b) marginal Oracle gains on CheXpert and MIMIC-CXR;(c) Difference between task-level and instance-level tool selection.
}
\label{fig:fig1}
\vspace{-2ex}
\end{figure}

Our contributions can be summarized as follows:
\begin{itemize}
    \item We investigate tool failures in medical agent systems and study how to construct a synergistic set of imperfect tools. Scalability analyses across tools, models, and datasets provide practical guidelines for tool, dataset, and model selection.
    \item Guided by the Single-Oracle risk, we develop CSRL, a collaborative synergy reinforcement learning framework that combines entropy-guided sampling with rewards for probabilistic risk minimization and disagreement-aware synergy learning, enabling performance beyond the best single tool.
    \item Across two tasks and seven datasets, learned synergy consistently outperforms individual tools, static baselines, and both general-purpose and specialized multimodal large language models, with average gains of 7.5\% in accuracy and 6.8\% in F1.
\end{itemize}



\section{Related Work}
\subsection{Tool-Augmented Agents and Tool Selection}
Recent progress in agentic systems has enabled multimodal large language models (MLLMs) to interact with external tools~\cite{schick2023toolformer,yao2022react}, APIs~\cite{patil2024gorilla,li2023apibank}, and specialized models~\cite{shen2023hugginggpt,wu2023visualgpt} to solve tasks beyond their parametric knowledge. Early work, such as ReAct~\cite{yao2022react} and Toolformer~\cite{schick2023toolformer}, showed that language models can interleave reasoning with tool invocation, while subsequent systems, including HuggingGPT~\cite{shen2023hugginggpt}, ART~\cite{paranjape2023art}, Search-R1~\cite{jin2025search}, and ToolRL~\cite{qian2025toolrl}, further explored tool planning, multi-step execution, and reinforcement learning for tool use. These methods typically focus on deciding which tool to call for a given task or how to decompose a complex instruction into a sequence of tool invocations.\par
\subsection{Medical Vision-Language Models and Diagnostic Tools}
Medical AI agents have increasingly incorporated external diagnostic models, retrieval systems, and domain-specific reasoning modules to support clinical question answering, radiology interpretation, triage, and treatment recommendation \citep{fallahpour2025medrax}. In chest X-ray analysis, specialized models such as CheXzero~\cite{tiu2022expert}, MedKLIP~\cite{wu2023medklip}, CheXagent~\cite{chen2024chexagent}, and supervised DenseNet variants~\cite{cohen2020torchxrayvision} provide useful diagnostic signals for different diseases and datasets \cite{ rajpurkar2017chexnet}. More recently, general-purpose and medical Multimodal Large Language Models (MLLMs) have shown promising performance on multimodal medical reasoning tasks \citep{chen2024huatuogptvisioninjectingmedicalvisual, sellergren2025medgemma, xu2025lingshu}.

\subsection{Reinforcement Learning for Tool-use Agents}

Reinforcement learning has recently been adopted to improve the decision-making behavior of language-model agents in settings involving discrete actions, structured outputs, delayed feedback, and non-differentiable external environments. In search-augmented reasoning, policy optimization is used to train agents to issue search queries, incorporate retrieved evidence, and generate final answers through multi-step interactions \citep{jin2025search}. In tool-use settings, reward-driven optimization further enables language agents to learn when and how to invoke external tools based on task feedback, rather than relying solely on supervised demonstrations or hand-crafted tool-use traces \citep{qian2025toolrl}.

Beyond tool invocation itself, recent RL methods for language models have explored scalable policy optimization algorithms that compare multiple sampled responses for the same input. Proximal Policy Optimization provides a widely used foundation for stable policy-gradient training \citep{schulman2017proximal}, while Group Relative Policy Optimization removes the need for a learned value model by estimating advantages from relative rewards within a group of rollouts \citep{shao2024deepseekmath}. This line of work provides a practical optimization framework for training language agents with rule-based or task-specific rewards.
\section{Method}
\begin{figure}[t]
  \centering
  \includegraphics[width=\linewidth]{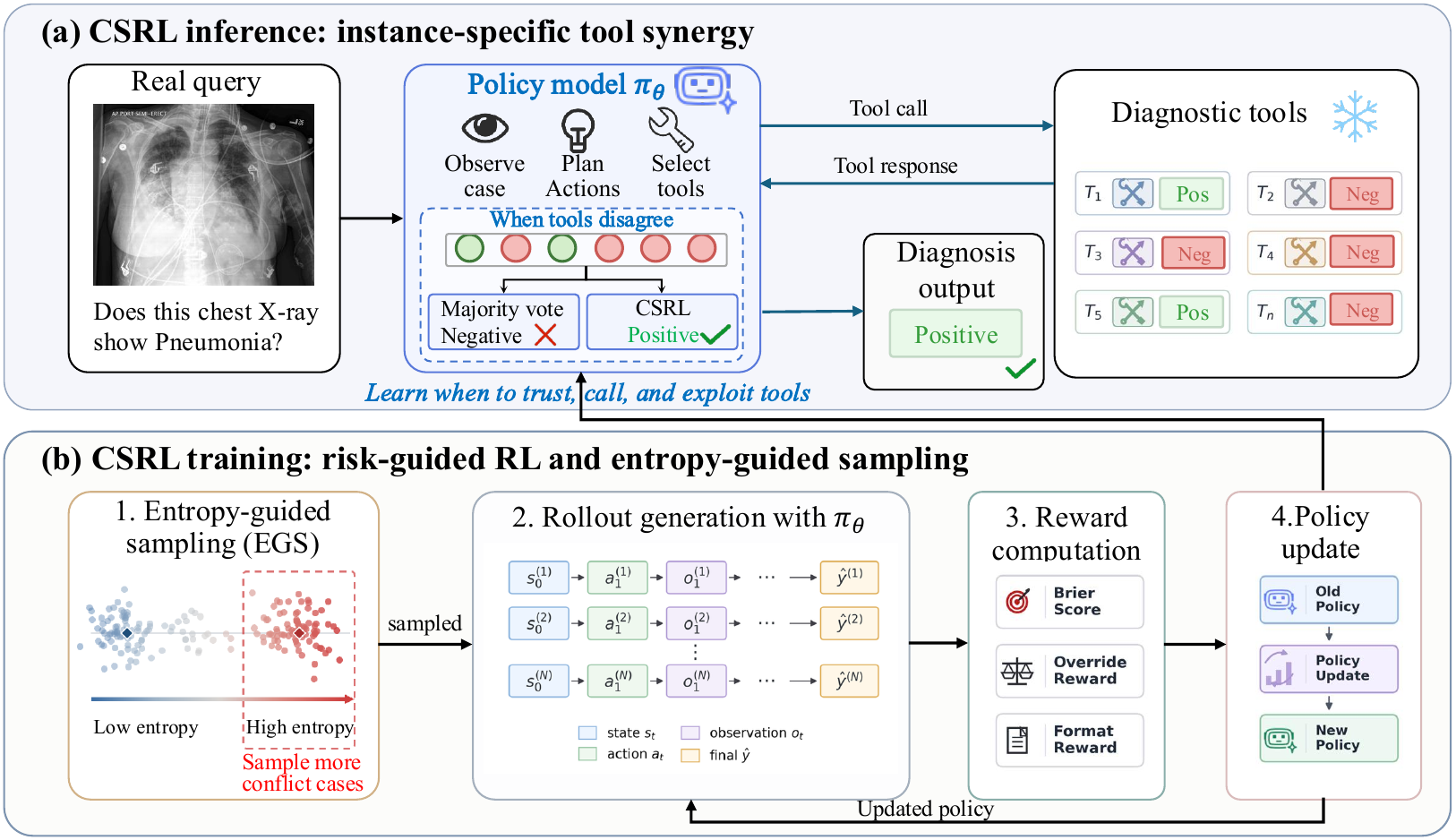}
  \caption{
  Overview of CSRL.
  (a) CSRL inference performs instance-specific agentic tool use.
  (b) CSRL training combines entropy-guided sampling with risk-guided reinforcement learning.
  }
  \label{fig:overview}
  \vskip -1pt
\end{figure}
\subsection{Preliminary}


\noindent\textbf{Problem Formulation. }Given a diagnostic sample $(x,q,y)\sim\mathcal{D}$, where 
$x\in\mathcal{X}$ is the input image, $q\in\mathcal{Q}$ is the target pathology query, and $y\in\mathcal{Y}$ is the ground-truth
label, we consider a fixed pool of $K$ pretrained diagnostic tools
$\mathcal{T}=\{T_1,\ldots,T_K\}$. For a given case, the policy may invoke a subset of tools from $\mathcal{T}$.
For any invoked tool $T_k$, the final output of each tool can be represented as 
$p_k=T_k(x,q)\in\mathcal{P}$. 
Due to heterogeneity in model architectures and training data, these outputs may exhibit non-overlapping error patterns.

To formalize this complementarity, we first define
the best single-tool risk $\mathcal{R}_{\mathrm{single}}^\star$ as
\begin{equation}
\begin{aligned}
\mathcal{R}_{\mathrm{single}}^\star
&=
\min_{k\in[K]}
\mathbb{E}_{(x,q,y)\sim\mathcal{D}}
\left[
\ell(p_k,y)
\right],
\end{aligned}
\label{eq:risks}
\end{equation}
Where
$\ell\in \mathcal{L}:\mathcal{P}\times\mathcal{Y}\rightarrow\mathbf{R} $ is a  diagnostic loss, and $\mathcal{L}$ is the space of loss functions. As we discussed above, a single best tool is imperfect in real-world scenarios. Thus, we consider discovering case-level tool selection. Specifically, given
$\Pi_{\mathcal{T}}$ be the space of tool integration policy $\pi:\mathcal{X}\times\mathcal{Q}\times\mathcal{P}^K\rightarrow \mathbf{R}$, the Oracle risk can be defined as
\begin{equation}
\begin{aligned}
\mathcal{R}_{\mathrm{oracle}}^\star
&=
\inf_{\pi\in\Pi_{\Theta}}
\mathbb{E}_{(x,q,y)\sim\mathcal{D}}
\left[
\ell\!\left(\pi(x,q,\mathbf{p}),y\right)
\right].
\end{aligned}
\label{eq:risks_oracl}
\end{equation}
Since the single tool strategy $\pi_k(x,q,\mathbf{p})=p_k$ is a special case in $\Pi_{\mathcal{T}}$, it is obvious that we can define the Single-Oracle risk gap as
\begin{equation}
\Delta_{\mathcal{T}}
=
\mathcal{R}_{\mathrm{single}}^\star
-
\mathcal{R}_{\mathrm{oracle}}^\star
\geq 0.
\label{eq:synergy_gap}
\end{equation}
A positive $\Delta_{\mathcal{T}}$ indicates that the strongest individual tool is not the performance
limit of the tool pool. As shown in the Introduction, performance gains ($\Delta_\mathcal{T}$) are significantly large, which supports our assumption.

To optimize the model toward oracle-level performance, we aim to learn a parameterized tool-use policy $\pi_\theta\in\Pi_{\mathcal{T}}$.
Given $\hat p_\theta=\pi_\theta(x,q,\mathbf{p})$,
we aim to learn a policy satisfying
\begin{equation}
\mathcal{R}_{\mathrm{oracle}}^\star
\leq
\mathcal{R}(\pi_\theta)
=
\mathbb{E}_{(x,q,y)\sim\mathcal{D}}
\left[
\ell(\hat p_\theta,y)
\right]
<
\mathcal{R}_{\mathrm{single}}^\star .
\label{eq:csrl_target}
\end{equation}

To this end, as provided in Figure~\ref{fig:overview}, we propose CSRL, a Collaborative Synergy Reinforcement Learning framework built upon GRPO~\cite{shao2024deepseekmath}, which introduces two components that align the optimization procedure with the target in Eq.~(4). First, we design trajectory-level rewards for GRPO that encourage valid tool use, minimize probabilistic diagnostic risk, and promote disagreement-aware synergy recovery, thereby directly pushing down \(\mathcal R(\pi_\theta)\). Second, we use Entropy-Guided Sampling to increase the training mass of high-disagreement cases, where the gap between single-tool performance and case-specific integration is most recoverable. 
\paragraph{Tool-Calling Rollout.} 
Rollouts $\{\tau_i\}$ define the interaction protocol between the policy $\pi_\theta$ and the tool-augmented environment: the policy $\pi_\theta$ uses \ToolCallTok...\EndToolCallTok to invoke diagnostic tools, receives tool feedback wrapped by \ToolRespTok...\EndToolRespTok, and terminates the trajectory by producing a final prediction inside \AnswerTok...\EndAnswerTok. The detailed procedure is provided in Appendix~\ref{app:rollout}.
\paragraph{GRPO Optimization.} We optimize $\pi_\theta$ with \textbf{Group Relative Policy Optimization}
(GRPO)~\citep{shao2024deepseekmath}, using EGS-stratified mini-batches
(Sec.~\ref{sec:egs}) and the trajectory-level reward $R_{\mathrm{overall}}$
from Sec.~\ref{sec:reward}. For each prompt $(x,q)$, GRPO samples $G$ rollouts
$\{\tau_i\}_{i=1}^G$ from $\pi_{\theta_{\mathrm{old}}}$ and assigns each rollout
a group-relative advantage
\begin{equation}
\hat A_i \;=\;
\frac{
R_{\mathrm{overall}}(\tau_i)\,-\,\mathrm{mean}\!\left(\{R_{\mathrm{overall}}(\tau_j)\}_{j=1}^{G}\right)
}{
\mathrm{std}\!\left(\{R_{\mathrm{overall}}(\tau_j)\}_{j=1}^{G}\right)
},
\label{eq:grpo_advantage}
\end{equation}
which is shared across all policy-generated tokens in $\tau_i$. The GRPO
objective combines a PPO-style clipped surrogate with a token-level KL penalty
to a frozen reference policy $\pi_{\mathrm{ref}}$:
\begin{equation}
\mathcal{J}_{\mathrm{GRPO}}(\theta)
\;=\;
\mathbb{E}\!\left[
\frac{1}{G}\sum_{i=1}^{G}\frac{1}{|\tau_i|}\sum_{t=1}^{|\tau_i|}
\Big\{
\min\!\big(
\rho_{i,t}\hat A_i,\;
\mathrm{clip}(\rho_{i,t},\,1\!-\!\epsilon,\,1\!+\!\epsilon)\,\hat A_i
\big)
-\beta\,\mathbb{D}_{\mathrm{KL}}\!\big[\pi_\theta\Vert\pi_{\mathrm{ref}}\big]
\Big\}
\right],
\label{eq:grpo_objective}
\end{equation}
with importance ratio
\begin{equation}
\rho_{i,t}
=
\frac{\pi_\theta(a_{i,t}\!\mid\!s_{i,t})}
{\pi_{\theta_{\mathrm{old}}}(a_{i,t}\!\mid\!s_{i,t})}.
\label{eq:importance_ratio}
\end{equation}

\subsection{Risk-guided Reward Design}
\label{sec:reward}
To train the policy $\pi_\theta$ with GRPO, we assign each rollout a rule-based
reward composed of three terms that capture format validity, probabilistic
correctness, and case-specific majority override: 
\begin{equation}
R_{\mathrm{overall}}
=
R_{\mathrm{brier}}
+
R_{\mathrm{override}}+R_{\mathrm{format}} .
\label{eq:reward}
\end{equation}

To encourage a low $\mathcal{R}(\pi_\theta)$ small, we first introduce the Brier reward as follows: 
\begin{equation}
R_{\mathrm{brier}}(\hat{p}_\theta, y)
=
\begin{cases}
1-(\hat p_\theta-y)^2, &
\begin{array}{l}
\text{if } \hat p_\theta\in[0,1] \text{ is parseable and }  n_{\mathrm{valid}}>0,
\end{array}
\\[0.5em]
0, & \text{otherwise}.
\end{cases}
\label{eq:brier_reward}
\end{equation}
Where, $n_{\mathrm{valid}}$ denote the number of valid tool responses observed in
the rollout. The Brier reward evaluates probabilistic correctness only when the
rollout uses valid tool evidence. The following proposition characterizes the role of $R_{\mathrm{brier}}$
\begin{proposition}
Assume that the prediction risk is defined with the Brier loss, i.e.,
\[
\mathcal{R}(\pi_\theta)
=
\mathbb{E}_{(x,q,y)\sim \mathcal{D}}
\left[
(\hat{p}_\theta - y)^2
\right].
\]
Then maximizing the Brier reward is equivalent to minimizing the prediction risk:
\[
\arg\max_\theta \mathbb{E}
\left[
R_{\mathrm{brier}}(\hat{p}_\theta, y)
\right]
=
\arg\min_\theta \mathcal{R}(\pi_\theta).
\]
\label{prop1}
\end{proposition}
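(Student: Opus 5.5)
The argument is a direct identity between the expected reward and the risk. The Brier reward is an affine, decreasing function of the squared error on the event where it is ``active''. I will restrict attention to the regime in which the proposition is meant: the rollout output $\hat p_\theta$ is parseable, lies in $[0,1]$, and is produced with $n_{\mathrm{valid}}>0$. On this event $R_{\mathrm{brier}}(\hat p_\theta,y)=1-(\hat p_\theta-y)^2$ pointwise. This is the standing assumption implicit in writing $\mathcal{R}(\pi_\theta)=\mathbb{E}[(\hat p_\theta-y)^2]$, since otherwise $\hat p_\theta$ is not a well-defined element of $[0,1]$. I will state this convention explicitly at the start of the proof.

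\textbf{Step 1.} Under this convention, take expectations over $(x,q,y)\sim\mathcal{D}$ and over the stochasticity of the rollout. Linearity of expectation then gives
\[
\mathbb{E}\bigl[R_{\mathrm{brier}}(\hat p_\theta,y)\bigr]=1-\mathbb{E}\bigl[(\hat p_\theta-y)^2\bigr]=1-\mathcal{R}(\pi_\theta).
\]
Since $(\hat p_\theta-y)^2\in[0,1]$, both expectations are finite, so no integrability issues arise.

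\textbf{Step 2.} The map $r\mapsto 1-r$ is strictly decreasing, and the constant $1$ does not depend on $\theta$. Hence for any $\theta,\theta'$,
\[
\mathbb{E}[R_{\mathrm{brier}}]\big|_{\theta}\ge \mathbb{E}[R_{\mathrm{brier}}]\big|_{\theta'}\iff \mathcal{R}(\pi_\theta)\le\mathcal{R}(\pi_{\theta'}).
\]
The two objectives therefore induce the same order on parameters, and their optimizer sets coincide: $\arg\max_\theta\mathbb{E}[R_{\mathrm{brier}}]=\arg\min_\theta\mathcal{R}(\pi_\theta)$. This holds as an equality of sets, including the case where either set is empty.

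\textbf{Main obstacle.} The only delicate point is the ``otherwise'' branch of the reward, where invalid outputs receive reward $0$. Without the convention, this zero reward is not the same as the loss $(\hat p_\theta-y)^2$ one would assign to such an output. I would handle it in one of two ways.

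\begin{itemize}
\item Restrict $\theta$ to policies that always produce valid outputs, which is the setting of the proposition.
\item Define the risk of an invalid output as the maximal loss $1$, which makes $R_{\mathrm{brier}}=1-\ell$ hold everywhere.
\end{itemize}

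In either case, Step 1 then holds verbatim and the proof goes through unchanged.
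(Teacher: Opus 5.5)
Your proof is correct and follows the paper's argument: the appendix proof also writes $\mathbb{E}[R_{\mathrm{brier}}(\hat p_\theta,y)]=1-\mathcal{R}(\pi_\theta)$ by linearity of expectation and reads off the equality of the optimizer sets. The paper simply assumes the branch $R_{\mathrm{brier}}=1-(\hat p_\theta-y)^2$ and never mentions the ``otherwise'' case, so your explicit convention makes precise what the paper leaves implicit; of your two options, restricting to always-valid policies is the faithful reading, since the other changes the stated risk on parseable answers with $n_{\mathrm{valid}}=0$.
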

Proposition \ref{prop1} illustrates that maximizing reward $R_{\mathrm{brier}}$ is aligned with minimizing the risk $\mathcal{R}(\pi_\theta)$. $R_{\mathrm{brier}}$ provides a proper probabilistic training signal for the final diagnostic score, but it is agnostic to the structure of tool disagreement. Such disagreement is central to the synergy gap, as it indicates that different tools succeed and fail on different cases, leaving room for a case-specific policy to recover information that no fixed tool can consistently capture. 
\par
To address the above issue, we introduce the override reward in the following.
\begin{equation}
R_{\mathrm{override}}(\hat p_\theta,  m, y)
=
\begin{cases}
\alpha, & \text{if } \ell(\hat p_\theta,y)< s \text{ and } \ell(\hat p_m, y)\geq s,\\
-\alpha, & \text{if } \ell(\hat p_\theta, y)\geq s \text{ and } \ell(\hat p_m,y)<s,\\
0, & \text{otherwise}.
\end{cases}
\label{eq:override_reward}
\end{equation}
Where $\hat p_m$ denotes the response of the majority vote among the valid invoked tools, $s$ is a threshold used to determine whether a prediction is consistent with the ground-truth label $y$ based on the loss $\ell$, e.g., $s=0.5$ in classification tasks, and $\alpha$ is the reward value which is set as 0.1 in our experiments. The following proposition demonstrates how $R_{\mathrm{override}}$ enlarges the risk gap.
\begin{proposition}
Considering the risk
\(\mathcal{R}(\pi_\theta)=\Pr(\hat p_\theta \neq y)\)
and the risk gap
\(\mathcal{G}(\pi_\theta)=\mathcal{R}^{\star}_{\mathrm{single}}-\mathcal{R}(\pi_\theta)\),
maximizing \(R_{\mathrm{override}}(\hat p_\theta,m,y)\) is aligned with maximizing
\(\mathcal{G}(\pi_\theta)\).
\label{prop2}
\end{proposition}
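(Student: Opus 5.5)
The plan is to show that the expected override reward is an affine function of the risk difference between the policy and the majority-vote rule. Since $\mathcal{R}^\star_{\mathrm{single}}$ does not depend on $\theta$, that difference moves exactly opposite to $\mathcal{G}(\pi_\theta)$.

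Under the 0--1 risk $\mathcal{R}(\pi_\theta)=\Pr(\hat p_\theta\neq y)$, with threshold $s$, the event $\ell(\hat p_\theta,y)<s$ is exactly the event that $\pi_\theta$ is correct. Write $C_\theta=\mathbf{1}[\hat p_\theta=y]$ and $C_m=\mathbf{1}[\hat p_m=y]$. By Eq.~\eqref{eq:override_reward}, the reward is $\alpha$ on $\{C_\theta=1,C_m=0\}$, $-\alpha$ on $\{C_\theta=0,C_m=1\}$, and $0$ otherwise. This gives the pointwise identity
\[
R_{\mathrm{override}}=\alpha\,(C_\theta-C_m).
\]
We verify it by checking the four cases of $(C_\theta,C_m)\in\{0,1\}^2$. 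Taking expectations over $\mathcal{D}$ and over the policy's randomness,
\[
\mathbb{E}\big[R_{\mathrm{override}}\big]=\alpha\big(\Pr(\hat p_m\neq y)-\mathcal{R}(\pi_\theta)\big)=\alpha\big(\mathcal{R}_m-\mathcal{R}(\pi_\theta)\big),
\]
where $\mathcal{R}_m$ is the risk of majority voting. Adding and subtracting $\mathcal{R}^\star_{\mathrm{single}}$ gives
\[
\mathbb{E}\big[R_{\mathrm{override}}\big]=\alpha\,\mathcal{G}(\pi_\theta)+\alpha\big(\mathcal{R}_m-\mathcal{R}^\star_{\mathrm{single}}\big).
\]
The second term is constant in $\theta$, and $\alpha>0$. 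Therefore $\arg\max_\theta\mathbb{E}[R_{\mathrm{override}}]=\arg\max_\theta\mathcal{G}(\pi_\theta)$, and any ascent direction for one is an ascent direction for the other. This mirrors the argument behind Proposition~\ref{prop1}.

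The main obstacle is the claim that $\mathcal{R}_m$ is independent of $\theta$. The majority vote is taken over the \emph{valid invoked} tools, and the policy chooses which tools to invoke. So $\hat p_m$ in principle depends on $\pi_\theta$, and the reward could be gamed by invoking tools whose majority is wrong. I would handle this first by conditioning on the invoked tool subset $S$ and proving the identity for each fixed $S$: given $S$, $\hat p_m$ is a deterministic function of $(x,q,\mathbf{p}_S)$. The alignment then holds for the answer-generation part of the policy.

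Next I would assume the invocation pattern is fixed, for example that all $K$ tools are always called, as in the rollout protocol of Appendix~\ref{app:rollout}. Under that assumption $\mathcal{R}_m$ is truly constant. I would state this assumption explicitly. In the general case the expected reward equals $\alpha\,\mathcal{G}(\pi_\theta)$ plus a term that is bounded in $[-\alpha,\alpha]$ and depends only on tool selection.

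Finally, I would note that $\mathbb{E}[R_{\mathrm{override}}]>0$ implies $\mathcal{R}(\pi_\theta)<\mathcal{R}_m$. Whenever the majority vote already matches the best single tool, $\mathcal{R}_m\le\mathcal{R}^\star_{\mathrm{single}}$, and then $\mathbb{E}[R_{\mathrm{override}}]>0$ gives $\mathcal{G}(\pi_\theta)>0$, which is the target in Eq.~\eqref{eq:csrl_target}.
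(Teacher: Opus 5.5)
Your argument is the paper's proof. The paper also derives $\mathbb{E}[R_{\mathrm{override}}]=\alpha\,[\mathcal{R}(m)-\mathcal{R}(\pi_\theta)]$, by splitting $\Pr(m\neq y)$ and $\Pr(\hat p_\theta\neq y)$ over joint correctness events, which is the same computation as your pointwise identity $R_{\mathrm{override}}=\alpha(C_\theta-C_m)$; it then writes $\mathcal{G}(\pi_\theta)=\mathcal{R}^\star_{\mathrm{single}}-\mathcal{R}(m)+\tfrac{1}{\alpha}\mathbb{E}[R_{\mathrm{override}}]$ with a $\theta$-independent offset. The paper only asserts that $\mathcal{R}(m)$ does not depend on $\theta$, so your caveat that the majority is taken over policy-invoked tools is a correct refinement; however, the problem formulation and Appendix~\ref{app:rollout} let the policy invoke any subset of tools, so ``all $K$ tools are always called'' is an extra assumption, not a property of the rollout protocol.
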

Proposition~\ref{prop2} shows that the override reward is aligned with our objective of increasing $\mathcal{G}(\pi_\theta)$, thereby improving the policy beyond the best single-tool baseline. \par
In addition, to encourage the policy to follow the required structured
output schema, we introduce a format reward  as follows:
\begin{equation}
R_{\mathrm{format}}
=
\begin{cases}
\alpha, &
\begin{array}{l}
\text{if all required fields appear in the correct order} \\
\text{and the terminal answer is parseable into } [0,1],
\end{array}
\\[0.5em]
0, & \text{otherwise}.
\end{cases}
\label{eq:format_reward}
\end{equation}
In our experiments, we set $\alpha=0.1$ in $R_{\mathrm{format}}$. Although we present the reward design under a classification setting in this section, it can be readily adapted to regression tasks, which we discuss in the Experiments section.

\subsection{Entropy-Guided Sampling}
\label{sec:egs}
The risk gap \(\Delta_{\mathcal T}\) is most visible in cases where the tool pool provides heterogeneous evidence. Under uniform sampling from the natural data distribution, these high-disagreement cases may be underrepresented, causing most GRPO groups to be dominated by consensus cases with weak signals for learning case-specific tool synergy. We therefore introduce Entropy-Guided Sampling (EGS), a static \emph{per-batch} stratified sampler that reweights mini-batch construction toward high-entropy regions of the tool-evidence space. By increasing the frequency of disagreement cases during training, EGS provides denser policy-gradient signal for learning when and how to exploit complementary tools.\par
For each training case $(x,q,y)$, we record the number of correctly classifying tools:
\begin{equation}
n_c(x,q) = \sum_{k=1}^{K} \mathbb{I}\{\ell(\hat p_k,y)<s\},
\label{eq:n_correct}
\end{equation}
where $s$ is the threshold discussed in Eq.~\ref{eq:override_reward}. We then measure the correctness-agreement entropy by
\begin{equation}
H(\bar n) = -\bar n\log\bar n - (1-\bar n)\log(1-\bar n), \qquad \bar n=\frac{n_c}{K}.
\label{eq:vote_entropy}
\end{equation}
This entropy quantifies case-level tool disagreement, reaching its maximum at \(n_c=K/2\) and vanishing at \(n_c\in\{0,K\}\). The useful learning signal for case-specific synergy is concentrated in the conflict region \(1\leq n_c\leq K-1\), where correct and incorrect tools coexist. In contrast, consensus cases provide limited synergy signal: \(n_c=0\) suggests either label noise or cases outside the capability of the tool pool, whereas \(n_c=K\) indicates that all tools are already correct and case-specific aggregation cannot further improve the prediction. \par
We therefore stratify the training set according to the entropy-induced disagreement level. Let \(\{\mathcal{S}_j\}_{j=1}^{K'}\) denote the resulting subsets of all data, ordered by increasing entropy. During mini-batch construction, each subset \(\mathcal{S}_j\) is assigned a sampling probability \(\rho_j\) proportional to its entropy level. Since the entropy of $S_1$ is obviously 0, we assign a small probability to this subset exclusively. Consequently, high-entropy subsets, which contain stronger tool disagreement and richer synergy signals, are sampled more frequently than low-entropy consensus subsets. This entropy-proportional sampling scheme biases training toward cases where case-specific tool integration is most beneficial, while preserving a nonzero sampling probability for all subsets to maintain calibration on easy or consensus cases.

\section{Experiments}
\subsection{EXPERIMENTAL SETUP}
\textbf{Datasets.}
We conduct experiments on two tasks and seven datasets on the chest X-ray (CXR) modality. For
\emph{binary pathology classification} (Task~1), CSRL is trained on the
official training splits of CheXpert~\cite{irvin2019chexpert}
and MIMIC-CXR~\cite{johnson2019mimic}, and evaluated on six test
benchmarks: their held-out test splits, three stratified $1{,}000$-image
out-of-domain samples from NIH ChestX-ray14~\cite{wang2017chestx},
RSNA Pneumonia Detection~\cite{shih2019augmenting}, and
VinDr-CXR~\cite{nguyen2022vindr}, plus
NIH-Google~\cite{majkowska2020chest} ($1{,}962$
radiologist-adjudicated images) as a clean-label check against NIH NLP
label noise.  For \emph{ medical VQA generalization} (Task~2), the policy is trained on a stratified $5{,}000$-sample subset of the \textit{choose\_train} split of
MIMIC-Ext-MIMIC-CXR-VQA~\cite{bae2023ehrxqa} under the M4CXR non-empty-GT setting, and evaluated on the gold $1{,}523$-question test split and its $933$-question non-empty-GT subset~\cite{park2025m4cxr}.

\textbf{Policy Model and Tool Pools.} We use Qwen2.5-VL-7B-Instruct~\cite{Bai2025Qwen25VLTR} as the policy
model for reinforcement learning. We assemble two task-specific tool
pools chosen for diversity in training data and architecture. For Task~1
(\emph{binary pathology classification}), the tool pool includes six
specialist tools, namely CheXagent~\cite{chen2024chexagent}, CheXzero~\cite{tiu2022expert}, MedKLIP~\cite{wu2023medklip}, and three DenseNet-121 backbones from
torchxrayvision~\cite{cohen2020torchxrayvision}. For Task~2
(\emph{VQA generalization}), the tool pool includes three medical
vision-language models, namely CheXagent-8B~\cite{chen2024chexagent}, RadVLM-7B~\cite{deperrois2025radvlm}, and
MedGemma-4B-IT~\cite{sellergren2025medgemma}.

\textbf{Implementation Details.} We implement CSRL with the Qwen2.5-VL-7B-Instruct~\cite{Bai2025Qwen25VLTR}
backbone. The policy is
optimized using the Group Relative Policy Optimization (GRPO)
algorithm~\cite{shao2024deepseekmath,guo2025deepseek} for
$\sim\!75$ iterations on 8 NVIDIA H200 GPUs, built on
\textsc{VeRL}~\cite{sheng2025hybridflow} with FSDP training and asynchronous
vLLM~\cite{kwon2023efficient} rollouts. During training, we use a batch size
of $256$ prompts; for each prompt, the agent samples $16$ trajectories with
up to $4$ reasoning turns and a maximum of $6$ parallel tool calls per turn. Key hyperparameters are AdamW with learning rate $1{\times}10^{-6}$, PPO clip
ratio $0.2$, low-variance KL regularization with coefficient $10^{-3}$ against
a frozen reference policy, and an entropy bonus of $10^{-3}$.

\textbf{Baselines and Comparisons.} We compare CSRL with a broad set of baselines spanning four categories: open-source general vision-language models (VLMs), open-source medical MLLMs, closed-source general MLLMs, and specialist CXR diagnostic tools. To further assess whether the gains come from case-specific synergy rather than simply using multiple tools, we include several tool-combination baselines, including logistic regression, majority voting, and best-tool-per-pathology selection.

\subsection{Experimental Analysis}

\begin{table*}[t]
\centering
\caption{Performance comparison for binary disease diagnosis across six chest X-ray benchmarks. We report macro accuracy (Acc) and macro F1. Out-of-domain (OOD) test sets are marked with $\diamond$. Best results are highlighted in \textbf{bold}, and second-best results are \underline{underlined}.}
\label{tab:main_results}
\scriptsize
\setlength{\tabcolsep}{2.6pt}
\renewcommand{\arraystretch}{1.08}
\resizebox{0.98\textwidth}{!}{%
\begin{tabular}{@{}l*{7}{cc}@{}}
\toprule
\multicolumn{1}{c}{\multirow{2}{*}{\textbf{Methods}}}
& \multicolumn{2}{c}{\textbf{CheXpert}}
& \multicolumn{2}{c}{\textbf{MIMIC-CXR}}
& \multicolumn{2}{c}{\textbf{ChestX-ray14}$^{\diamond}$}
& \multicolumn{2}{c}{\textbf{VinDr-CXR}$^{\diamond}$}
& \multicolumn{2}{c}{\textbf{NIH-Google}$^{\diamond}$}
& \multicolumn{2}{c}{\textbf{RSNA}$^{\diamond}$}
& \multicolumn{2}{c}{\textbf{Average}} \\
\cmidrule(lr){2-3}
\cmidrule(lr){4-5}
\cmidrule(lr){6-7}
\cmidrule(lr){8-9}
\cmidrule(lr){10-11}
\cmidrule(lr){12-13}
\cmidrule(lr){14-15}
& Acc$\uparrow$ & F1$\uparrow$
& Acc$\uparrow$ & F1$\uparrow$
& Acc$\uparrow$ & F1$\uparrow$
& Acc$\uparrow$ & F1$\uparrow$
& Acc$\uparrow$ & F1$\uparrow$
& Acc$\uparrow$ & F1$\uparrow$
& Acc$\uparrow$ & F1$\uparrow$ \\
\midrule

\rowcolor{groupgray}
\multicolumn{15}{c}{\textbf{Open-source General VLMs}} \\
Qwen2.5-VL-7B-Instruct~\cite{Bai2025Qwen25VLTR}
& 0.731 & 0.181
& 0.539 & 0.371
& \second{0.722} & 0.155
& 0.817 & 0.121
& 0.793 & 0.378
& 0.771 & 0.336
& 0.729 & 0.257 \\
InternVL3.5-38B-Instruct~\cite{wang2025internvl3}
& 0.744 & 0.451
& 0.790 & 0.710
& 0.585 & 0.228
& 0.746 & 0.340
& \second{0.879} & 0.478
& \best{0.790} & \second{0.628}
& 0.756 & 0.472 \\
Qwen2.5-VL-72B-Instruct~\cite{Bai2025Qwen25VLTR}
& 0.382 & 0.309
& 0.713 & 0.703
& 0.375 & 0.174
& 0.435 & 0.100
& 0.785 & 0.429
& 0.425 & 0.429
& 0.519 & 0.357 \\

\midrule
\rowcolor{groupgray}
\multicolumn{15}{c}{\textbf{Open-source Medical MLLMs}} \\
HuatuoGPT-Vision-7B~\cite{chen2024towards}
& 0.699 & 0.397
& 0.698 & 0.667
& 0.512 & 0.212
& 0.894 & 0.282
& 0.847 & 0.535
& 0.726 & 0.580
& 0.729 & 0.446 \\
Lingshu-7B~\cite{xu2025lingshu}
& 0.534 & 0.386
& 0.711 & 0.751
& 0.532 & 0.259
& 0.770 & 0.248
& 0.723 & 0.597
& 0.706 & 0.575
& 0.663 & 0.469 \\
MedGemma-27B-IT~\cite{sellergren2025medgemma}
& 0.722 & 0.454
& 0.721 & 0.733
& 0.620 & \second{0.280}
& 0.902 & 0.403
& 0.777 & 0.615
& 0.742 & 0.585
& 0.747 & 0.512 \\
Hulu-Med-Flash-Preview-27B~\cite{jiang2025hulu}
& 0.728 & 0.252
& 0.767 & 0.767
& \best{0.727} & 0.277
& 0.868 & 0.411
& 0.854 & 0.666
& \best{0.790} & 0.291
& \second{0.789} & 0.444 \\

\midrule
\rowcolor{groupgray}
\multicolumn{15}{c}{\textbf{Closed-source General MLLMs}} \\
Claude-Sonnet-4.6
& 0.473 & 0.337
& 0.709 & 0.708
& 0.330 & 0.192
& 0.543 & 0.129
& 0.754 & 0.523
& 0.465 & 0.433
& 0.546 & 0.387 \\
GPT-4o
& 0.582 & 0.355
& 0.699 & 0.672
& 0.499 & 0.204
& 0.851 & 0.201
& 0.845 & 0.509
& 0.687 & 0.556
& 0.694 & 0.416 \\
Gemini-3.1-Pro-Preview
& 0.685 & 0.436
& 0.769 & 0.760
& 0.535 & 0.254
& 0.930 & \second{0.455}
& 0.849 & 0.670
& 0.731 & 0.602
& 0.750 & \second{0.530} \\

\midrule
\rowcolor{groupgray}
\multicolumn{15}{c}{\textbf{Specialist CXR Tools}} \\
DenseNet-CheX~\cite{cohen2022torchxrayvision}
& 0.682 & 0.433
& 0.704 & 0.711
& 0.517 & 0.225
& 0.916 & 0.271
& 0.702 & 0.522
& 0.686 & 0.505
& 0.701 & 0.445 \\
DenseNet-MIMIC~\cite{cohen2022torchxrayvision}
& 0.657 & 0.413
& 0.701 & 0.686
& 0.559 & 0.208
& 0.787 & 0.179
& 0.774 & 0.491
& 0.703 & 0.461
& 0.697 & 0.406 \\
DenseNet-NIH~\cite{cohen2022torchxrayvision}
& 0.607 & 0.379
& 0.665 & 0.688
& 0.440 & 0.218
& 0.732 & 0.157
& 0.611 & 0.500
& 0.689 & 0.498
& 0.624 & 0.407 \\
CheXzero~\cite{tiu2022expert}
& 0.763 & 0.489
& 0.746 & 0.756
& 0.603 & 0.274
& 0.904 & 0.362
& 0.781 & 0.614
& 0.683 & 0.563
& 0.747 & 0.510 \\
MedKLIP~\cite{wu2023medklip}
& 0.713 & 0.464
& 0.737 & 0.762
& 0.580 & 0.261
& 0.897 & 0.383
& 0.677 & 0.565
& 0.714 & 0.589
& 0.720 & 0.504 \\
CheXagent-8B~\cite{chen2024vision}
& 0.786 & 0.498
& \second{0.808} & 0.802
& 0.507 & 0.263
& 0.502 & 0.225
& 0.746 & 0.614
& 0.492 & 0.451
& 0.640 & 0.476 \\

\midrule
\rowcolor{groupgray}
\multicolumn{15}{c}{\textbf{Tool Combination Baselines}} \\
Logistic Regression
& 0.784 & 0.510
& 0.791 & 0.805
& 0.586 & 0.256
& 0.821 & 0.369
& 0.853 & 0.649
& 0.459 & 0.446
& 0.716 & 0.506 \\
Majority Voting ($\mathrm{tie}\rightarrow\mathrm{neg}$)
& 0.766 & 0.490
& 0.774 & 0.762
& 0.608 & 0.270
& \second{0.934} & 0.409
& 0.830 & 0.634
& 0.745 & 0.602
& 0.776 & 0.528 \\
Majority Voting ($\mathrm{tie}\rightarrow\mathrm{pos}$)
& 0.696 & 0.467
& 0.779 & 0.779
& 0.494 & 0.249
& 0.875 & 0.333
& 0.731 & 0.586
& 0.682 & 0.569
& 0.709 & 0.497 \\
Best Tool per Pathology
& \second{0.801} & \second{0.516}
& 0.804 & \best{0.812}
& 0.617 & 0.278
& 0.822 & 0.374
& 0.866 & \second{0.674}
& 0.492 & 0.451
& 0.734 & 0.518 \\

\midrule
\rowcolor{oursblue}
\textbf{Ours (CSRL)}
& \best{0.859} & \best{0.563}
& \best{0.810} & \second{0.810}
& 0.664 & \best{0.301}
& \best{0.935} & \best{0.482}
& \best{0.889} & \best{0.682}
& \second{0.772} & \best{0.630}
& \best{0.822} & \best{0.578} \\

\bottomrule
\end{tabular}%
}
\vspace{-0.5em}
\end{table*}

\begin{table}[ht!]
\centering
\scriptsize
\setlength{\tabcolsep}{3.0pt}
\renewcommand{\arraystretch}{1.15}
\providecommand{\allabldeltasize}{\fontsize{5.8pt}{6.2pt}\selectfont}
\providecommand{\allabldec}[1]{\,\raisebox{-0.20ex}{\textcolor{red}{{\allabldeltasize (#1\%)}}}}
\providecommand{\allablinc}[1]{\,\raisebox{-0.20ex}{\textcolor{green!50!black}{{\allabldeltasize (#1\%)}}}}
\providecommand{\allablsame}[1]{\,\raisebox{-0.20ex}{\textcolor{gray}{{\allabldeltasize (#1\%)}}}}

\caption{
Ablation study on three CXR benchmarks. We report macro accuracy (Acc) and macro F1. $\diamond$: OOD test sets. ``w. $n$ tool(s)'': allow at most $n$ diagnostic tools per rollout. All models are trained on CheXpert and MIMIC-CXR, and all changes are measured relative to CSRL.
}
\label{tab:ablation_csrl_all}

\vspace{5pt}

\resizebox{\linewidth}{!}{
\begin{tabular}{l cc cc cc cc}
\toprule
\multirow{2}{*}{\textbf{Method}}
& \multicolumn{2}{c}{\textbf{CheXpert}}
& \multicolumn{2}{c}{\textbf{MIMIC-CXR}}
& \multicolumn{2}{c}{\textbf{ChestX-ray14}$^\diamond$}
& \multicolumn{2}{c}{\textbf{Average}} \\
\cmidrule(lr){2-3}
\cmidrule(lr){4-5}
\cmidrule(lr){6-7}
\cmidrule(lr){8-9}
& \textbf{Acc$\uparrow$} & \textbf{F1$\uparrow$}
& \textbf{Acc$\uparrow$} & \textbf{F1$\uparrow$}
& \textbf{Acc$\uparrow$} & \textbf{F1$\uparrow$}
& \textbf{Acc$\uparrow$} & \textbf{F1$\uparrow$} \\
\midrule

\multicolumn{9}{l}{\emph{Ablation for Policy Model}} \\
\quad w. Qwen3-VL-2B-Instruct as policy
& 0.783\allabldec{-7.6} & 0.504\allabldec{-5.9}
& 0.810\allablsame{0.0} & 0.807\allabldec{-0.3}
& 0.577\allabldec{-8.7} & 0.275\allabldec{-2.7}
& 0.723\allabldec{-5.5} & 0.529\allabldec{-3.0} \\

\quad w. Qwen2.5-VL-3B-Instruct as policy
& 0.812\allabldec{-4.7} & 0.535\allabldec{-2.8}
& 0.797\allabldec{-1.3} & 0.807\allabldec{-0.4}
& 0.626\allabldec{-3.8} & 0.291\allabldec{-1.1}
& 0.745\allabldec{-3.3} & 0.544\allabldec{-1.4} \\

\midrule
\multicolumn{9}{l}{\emph{Ablation for Training Data Scale}} \\
\quad w. 10\% training data
& 0.830\allabldec{-2.9} & 0.525\allabldec{-3.8}
& 0.787\allabldec{-2.3} & 0.752\allabldec{-5.8}
& 0.631\allabldec{-3.3} & 0.279\allabldec{-2.2}
& 0.749\allabldec{-2.8} & 0.519\allabldec{-4.0} \\

\quad w. 50\% training data
& 0.854\allabldec{-0.5} & 0.560\allabldec{-0.3}
& 0.787\allabldec{-2.3} & 0.776\allabldec{-3.4}
& 0.665\allablinc{+0.1} & 0.296\allabldec{-0.5}
& 0.769\allabldec{-0.9} & 0.544\allabldec{-1.4} \\

\midrule
\multicolumn{9}{l}{\emph{Ablation for Training Dataset Composition}} \\
\quad w. CheXpert-only training
& 0.844\allabldec{-1.5} & 0.552\allabldec{-1.1}
& 0.802\allabldec{-0.8} & 0.781\allabldec{-3.0}
& 0.649\allabldec{-1.5} & 0.295\allabldec{-0.6}
& 0.765\allabldec{-1.3} & 0.543\allabldec{-1.6} \\

\quad w. extra ChestX-ray14 training data$^\dagger$
& 0.845\allabldec{-1.4} & 0.553\allabldec{-1.0}
& 0.805\allabldec{-0.5} & 0.802\allabldec{-0.9}
& 0.638\allabldec{-2.6} & 0.301\allablsame{0.0}
& 0.763\allabldec{-1.5} & 0.552\allabldec{-0.6} \\

\midrule
\multicolumn{9}{l}{\emph{Ablation for Sampling and Reward Design}} \\
\quad w/o Entropy-Guided Sampling
& 0.828\allabldec{-3.1} & 0.519\allabldec{-4.5}
& 0.785\allabldec{-2.5} & 0.745\allabldec{-6.5}
& 0.654\allabldec{-1.0} & 0.275\allabldec{-2.6}
& 0.756\allabldec{-2.2} & 0.513\allabldec{-4.5} \\

\quad w/o override reward
& 0.848\allabldec{-1.1} & 0.544\allabldec{-1.9}
& 0.793\allabldec{-1.7} & 0.779\allabldec{-3.1}
& 0.664\allablsame{0.0} & 0.294\allabldec{-0.7}
& 0.768\allabldec{-0.9} & 0.539\allabldec{-1.9} \\

\midrule
\multicolumn{9}{l}{\emph{Ablation for Tool-Call Budget}} \\
\quad w. 1 tool
& 0.786\allabldec{-7.3} & 0.500\allabldec{-6.3}
& 0.816\allablinc{+0.6} & 0.809\allabldec{-0.2}
& 0.507\allabldec{-15.7} & 0.263\allabldec{-3.8}
& 0.703\allabldec{-7.5} & 0.524\allabldec{-3.4} \\

\quad w. 2 tools
& 0.792\allabldec{-6.8} & 0.504\allabldec{-5.9}
& 0.817\allablinc{+0.7} & 0.809\allabldec{-0.1}
& 0.528\allabldec{-13.6} & 0.267\allabldec{-3.4}
& 0.712\allabldec{-6.6} & 0.527\allabldec{-3.2} \\

\quad w. 3 tools
& 0.859\allablsame{+0.0} & 0.558\allabldec{-0.5}
& 0.779\allabldec{-3.1} & 0.772\allabldec{-3.8}
& 0.690\allablinc{+2.6} & 0.297\allabldec{-0.4}
& 0.776\allabldec{-0.2} & 0.543\allabldec{-1.6} \\

\quad w. 4 tools
& 0.849\allabldec{-1.0} & 0.557\allabldec{-0.7}
& 0.797\allabldec{-1.3} & 0.788\allabldec{-2.2}
& 0.672\allablinc{+0.8} & 0.295\allabldec{-0.6}
& 0.773\allabldec{-0.5} & 0.547\allabldec{-1.2} \\

\quad w. 5 tools
& 0.846\allabldec{-1.3} & 0.550\allabldec{-1.3}
& 0.788\allabldec{-2.1} & 0.785\allabldec{-2.5}
& 0.684\allablinc{+2.0} & 0.297\allabldec{-0.4}
& 0.773\allabldec{-0.5} & 0.544\allabldec{-1.4} \\

\midrule
\rowcolor{oursblue}
\textbf{CSRL (ours)}
& 0.859 & 0.563
& 0.810 & 0.810
& 0.664 & 0.301
& 0.778 & 0.558 \\

\bottomrule
\end{tabular}
}
\end{table}

To systematically evaluate the effectiveness of our proposed approach, we organize the experiment as follows.

\textbf{Learning tool synergy yields robust gains across in-domain and out-of-domain datasets.} 
Our 
\begin{wraptable}{r}{0.5\linewidth}
\vspace{-8pt}
\centering
\scriptsize
\setlength{\tabcolsep}{4.2pt}
\renewcommand{\arraystretch}{1.12}
\caption{
Generalization performance on the VQA task.
We report F1, exact match (EM), precision (P), recall (R), and their average score.
Best results are highlighted in \textbf{bold}.
}
\label{tab:vqa_generalization}
\vspace{3pt}

\resizebox{\linewidth}{!}{
\begin{tabular}{lccccc}
\toprule
\textbf{Model} & \textbf{F1$\uparrow$} & \textbf{EM$\uparrow$} & \textbf{P$\uparrow$} & \textbf{R$\uparrow$} & \textbf{Avg.$\uparrow$} \\
\midrule
Qwen2.5-VL-7B-Instruct
& 0.417 & 0.268 & 0.431 & 0.439 & 0.389 \\
MedGemma-4B-IT
& 0.691 & 0.509 & 0.712 & 0.715 & 0.657 \\
RadVLM-7B
& 0.653 & 0.528 & 0.698 & 0.638 & 0.629 \\
CheXagent-8B
& 0.682 & 0.545 & 0.750 & 0.647 & 0.656 \\
\midrule
\textbf{CSRL (ours)}
& \textbf{0.737} & \textbf{0.564} & \textbf{0.758} & \textbf{0.758} & \textbf{0.704} \\
\bottomrule
\end{tabular}
}
\end{wraptable}
proposed CSRL achieves the best overall performance among all compared methods (see Table~\ref{tab:main_results}), obtaining an average Acc of 0.822 and an average F1 of 0.578 across the six CXR benchmarks, outperforming the strongest prior method by 3.3\% in accuracy and 4.8\% in F1 score. Compared with the strongest individual tool, CheXzero, CSRL improves the average performance by \textbf{7.5\%} in Acc and \textbf{6.8\%} in F1, showing that explicitly learning tool synergy is more effective than relying on any single tool. 
This advantage is also consistent across datasets: CSRL almost achieves the best or the second-best F1 score and Acc on all six benchmarks. Notably, CSRL performs strongly on both in-domain and out-of-domain test sets, including ChestX-ray14, VinDr-CXR, NIH-Google, and RSNA, suggesting that learned tool synergy helps the model better exploit complementary diagnostic capabilities under distribution shifts. In contrast, individual VLMs, specialist CXR tools, and simple combination baselines often perform well only on specific datasets or metrics, but their average performance remains lower. These results demonstrate that CSRL can learn when and how different tools complement each other, leading to more robust and accurate binary disease diagnosis than the best individual tool.

Beyond binary disease diagnosis, Table~\ref{tab:vqa_generalization} further evaluates the generalization ability of CSRL on the VQA task. CSRL consistently achieves the best performance across all metrics. Compared with the strongest baseline, MedGemma-4B-IT, CSRL improves the average score from 0.657 to 0.704, corresponding to a gain of \textbf{4.7\%}, indicating that learning tool synergy not only improves classification performance but also enhances the model's ability to recover more complete and accurate answers in a different task setting.

\textbf{Trade-off between tool-set size, performance, and cost.}
Figure~\ref{fig:scalability-ablation} reveals a clear trade-off between tool-set size, performance, and computational cost. Increasing the number of tools generally improves performance, but the marginal gain diminishes as the tool set becomes larger. As shown in the tool scalability ablation (Figure~\ref{fig:tool_scalability} ), expanding the tool set from T1 to T3 brings a substantial improvement, with average macro Acc increasing from 70.29\% to 77.60\% and average macro F1 increasing from 52.40\% to 54.25\%. However, further increasing the tool set from T3 to T6 yields only limited additional gains, with Acc fluctuating around 77\%--78\% and F1 improving more moderately to 55.82\%. This suggests that a small number of complementary tools can already capture most of the useful inter-tool synergy, while adding more tools may introduce redundancy and higher inference cost.

The results also show that performance is affected not only by the number of tools, but also by the capacity of the policy model (Figure ~\ref{fig:policy_model_scalability} ) and the quality of training data (Figures ~\ref{fig:data_scalability} and ~\ref{fig:dataset_scalability}). Larger policy models consistently improve performance, from 72.31\% Acc and 52.85\% F1 with the 2B model to 77.76\% Acc and 55.82\% F1 with the 7B model, indicating that stronger policy models are better at learning when different tools should be trusted or combined. Similarly, increasing the training data scale improves both Acc and F1, while using a more diverse training composition further strengthens generalization. Overall, these results indicate that the best performance-cost balance is achieved by selecting a compact but complementary tool set and pairing it with a sufficiently capable policy model, rather than naively increasing the number of tools.

\subsection{Ablation Study}
Table~\ref{tab:ablation_csrl_all} presents a comprehensive ablation study of CSRL. Overall, the full CSRL model achieves the best average performance, with an average Acc of 0.778 and an average F1 of 0.558, demonstrating the effectiveness of each major design component. First, reducing the policy model scale consistently degrades performance. Using Qwen3-VL-2B or Qwen2.5-VL-3B as the policy model lowers the average Acc by 5.5\% and 3.3\%, respectively, indicating that a stronger policy model is important for learning reliable tool synergy. Second, reducing the training data scale also harms performance: using only 10\% and 50\% of the training data decreases average Acc by 2.8\% and 0.9\%, and average F1 by 4.0\% and 1.4\%, respectively. This suggests that sufficient training data helps the policy model better learn when and how to combine diagnostic tools.

\begin{figure*}[h]
      \centering
      \includegraphics[width=0.85\textwidth]{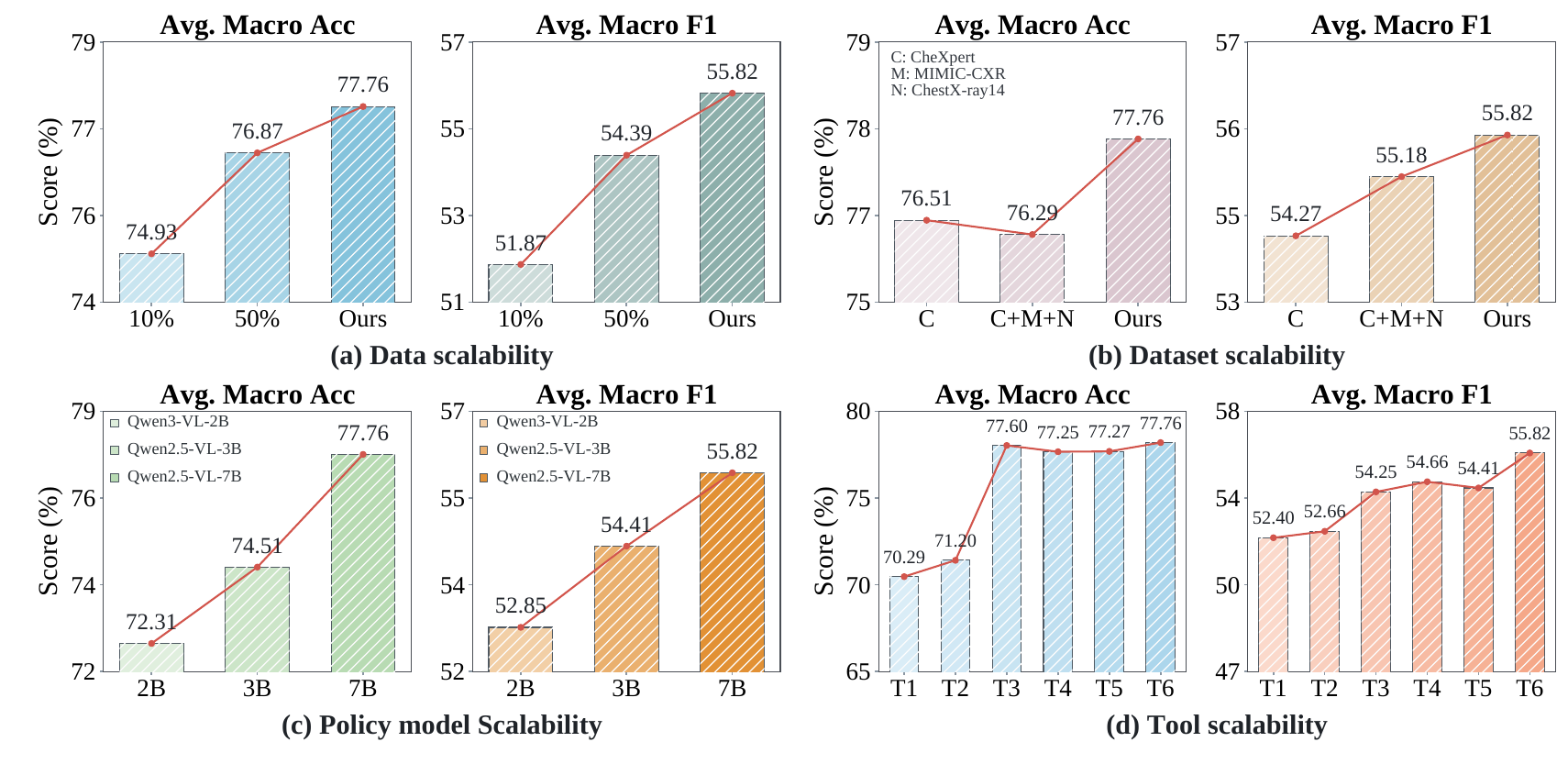}

      \caption{Scalability studies of CSRL. (a) Scalability across training data ratios; (b) Scalability across different dataset compositions; (c) Scalability across various policy models. (d) Scalability across tool-set sizes. T1-T6 denote tool sets with 1 to 6 tools, respectively.}
      \label{fig:scalability-ablation}

      \phantomsubcaption\label{fig:data_scalability}
      \phantomsubcaption\label{fig:dataset_scalability}
      \phantomsubcaption\label{fig:policy_model_scalability}
      \phantomsubcaption\label{fig:tool_scalability}
  \end{figure*}

  \vspace{-6pt}

The ablation on training dataset composition further shows that dataset diversity contributes to generalization. Training only on CheXpert or additionally including ChestX-ray14 both lead to lower average performance than the default CSRL setting, suggesting that simply adding more data is not always sufficient; the training composition must also align with the target generalization setting. The sampling and reward design ablation confirms the importance of both proposed components. Removing entropy-guided sampling decreases the average Acc and F1 by 2.2\% and 4.5\%, while removing the override reward reduces them by 0.9\% and 1.9\%, showing that exploration and disagreement-aware reward shaping are both beneficial for recovering tool synergy. Finally, the tool-call budget ablation reveals a clear performance-cost trade-off. Allowing only one or two tools substantially reduces performance, while using three to five tools approaches the full model. This indicates that multiple tools are necessary to capture complementary diagnostic signals, but most of the benefit can be obtained with a moderate tool budget. Together, these results validate the contribution of the policy model, training data, reward design, and tool budget in enabling CSRL to learn robust tool synergy.

\section{Conclusion and Limitations}
Motivated by the gap between task-level and case-level tool selection, we presented CSRL (Collaborative Synergy Reinforcement Learning), a GRPO-based framework for learning instance-level synergy among imperfect medical tools. Across chest X-ray diagnosis and medical VQA, CSRL consistently outperforms individual tools, static combinations, and general-purpose or medical multimodal models, showing that synergistic gains arise from case-level complementarity rather than standalone accuracy. \textbf{Limitations.} Our study focuses on chest X-ray benchmarks, predefined tool pools, and retrospective evaluation. Future work will extend CSRL to broader clinical tasks, dynamic tool ecosystems, and prospective human-in-the-loop validation before deployment.

\bibliographystyle{abbrv}
\bibliography{neurips_2026}

\appendix
\newpage
\section{Theoretical Proof}
\begin{proof}[Proof of Proposition \ref{prop1}]
 By definition, the Brier reward is
\[
\begin{aligned}
R_{\mathrm{brier}}(\hat p_\theta,y)
&=
1-(\hat p_\theta-y)^2 .
\end{aligned}
\]
Taking expectation over \((x,q,y)\sim\mathcal D\), we obtain
\[
\begin{aligned}
\mathbb{E}_{(x,q,y)\sim\mathcal D}
\left[
R_{\mathrm{brier}}(\hat p_\theta,y)
\right]
&=
\mathbb{E}_{(x,q,y)\sim\mathcal D}
\left[
1-(\hat p_\theta-y)^2
\right] \\
&=
1-
\mathbb{E}_{(x,q,y)\sim\mathcal D}
\left[
(\hat p_\theta-y)^2
\right] \\
&=
1-\mathcal R(\pi_\theta).
\end{aligned}
\]
Therefore,
\[
\arg\max_\theta
\mathbb{E}
\left[
R_{\mathrm{brier}}(\hat p_\theta,y)
\right]
=
\arg\min_\theta
\mathcal R(\pi_\theta).
\]
\end{proof}

\begin{proof}[proof of Proposition \ref{prop2}]
Accoring to the definition of $R_{\mathrm{override}}$, the decision made by $s$ can be presented as 
\[
\Pr(p = y) = \Pr(\ell(p,y)<s), \Pr(p \ne y) = \Pr(\ell(p,y)\geq s)
\]
Then we can define the risk of the policy as
\[
\mathcal R(\pi_\theta)
=
\Pr(\hat p_\theta \ne y),
\]
and the risk of the invoked-tool majority is
\[
\mathcal R(m)
=
\Pr(m \ne y).
\]
By definition, the override reward is
\[
R_{\mathrm{override}}(\hat p_\theta,m,y)
=
\alpha\mathbb{I}\{\hat p_\theta=y,\ m\ne y\}
-
\alpha\mathbb{I}\{\hat p_\theta\ne y,\ m=y\},
\]
where \(\alpha>0\) is the override coefficient. Taking expectation gives
\[
\begin{aligned}
\mathbb{E}
\left[
R_{\mathrm{override}}(\hat p_\theta,m,y)
\right]
&=
\alpha
\Pr(\hat p_\theta=y,\ m\ne y)
-
\alpha
\Pr(\hat p_\theta\ne y,\ m=y).
\end{aligned}
\]
On the other hand,
\[
\begin{aligned}
\mathcal R(m)-\mathcal R(\pi_\theta)
&=
\Pr(m\ne y)-\Pr(\hat p_\theta\ne y) \\
&=
\Pr(m\ne y,\hat p_\theta=y)
+
\Pr(m\ne y,\hat p_\theta\ne y) \\
&\quad -
\Pr(\hat p_\theta\ne y,m=y)
-
\Pr(\hat p_\theta\ne y,m\ne y) \\
&=
\Pr(m\ne y,\hat p_\theta=y)
-
\Pr(m=y,\hat p_\theta\ne y).
\end{aligned}
\]
Therefore,
\[
\mathbb{E}
\left[
R_{\mathrm{override}}(\hat p_\theta,m,y)
\right]
=
\alpha
\left[
\mathcal R(m)-\mathcal R(\pi_\theta)
\right].
\]
Since \(\alpha>0\), maximizing the expected override reward is equivalent to maximizing the binary risk improvement of the policy over the invoked-tool majority.

Now consider the improvement gap
\[
\mathcal G(\pi_\theta)
=
\mathcal R^\star_{\mathrm{single}}
-
\mathcal R(\pi_\theta) = \mathcal R^\star_{\mathrm{single}}
-
\mathcal R(m) + \frac{1}{\alpha}\mathbb{E}
\left[
R_{\mathrm{override}}(\hat p_\theta,m,y)
\right].
\]
Because \(\mathcal R^\star_{\mathrm{single}}\)  and \(\mathcal R(m)\)is independent of \(\theta\), maximizing \(\mathcal G(\pi_\theta)\) is equivalent to maximizing \(R_{\mathrm{override}}(\hat p_\theta,m,y)\). 
\end{proof}

\section{Tool-Calling Rollout Procedure}
\label{app:rollout}

This appendix provides the general rollout template used by CSRL. A rollout
begins with a fixed system prompt that contains the task input $x$, the query
$q$, and a \ToolsTok{} block declaring the JSON schemas of the $K$ available
tools. The policy $\pi_\theta$ then interacts with the tool-augmented
environment for up to $T_{\max}$ assistant turns.

At each assistant turn, the policy may either invoke one or more tools or
terminate the rollout with a final answer. A tool invocation follows the
template:
\[
\ToolCallTok
\{
\texttt{"name"}: T_k,\ 
\texttt{"arguments"}: \mathbf{a}_k
\}
\EndToolCallTok .
\]
The field \texttt{"name"} specifies the selected tool, and
\texttt{"arguments"} contains the tool-specific arguments constructed from the
current input, query, and dialogue context. Up to $M$ tool calls may be generated
in a single assistant turn. All tool calls generated in the same turn are
executed in parallel by the environment, and their responses are inserted into
the dialogue context before the next assistant turn.

For each invoked tool $T_k$, the environment returns its observation using the
template:
\[
\ToolRespTok{}\,
T_k\ \text{\texttt{response}}:
\ o_k
\,\EndToolRespTok{} ,
\]
where $o_k$ denotes the tool observation returned by $T_k$. The observation
format is determined by the corresponding tool schema and task setting. CSRL
does not require all tools to share the same output format; the policy conditions
on the returned observations through the dialogue context.

The policy may terminate the rollout by generating a final answer:
\[
\AnswerTok{} \hat{p}_\theta \EndAnswerTok{} ,
\]
where $\hat{p}_\theta \in \mathcal{P}$ denotes the task-specific prediction
parsed from the terminal answer. The rollout ends once a terminal answer is
produced or the maximum number of assistant turns is reached.

Tokens inserted by the environment, namely the \ToolRespTok{} blocks, are
excluded from the policy-gradient loss. Only tokens autoregressively generated
by $\pi_\theta$, including reasoning tokens, tool-call tags, and the terminal
answer, contribute to gradient updates. This ensures that optimization is
applied only to model-generated decisions rather than to environment-generated
tool observations.

\section{Broader Impact}
This work may improve the reliability of medical AI agents by learning how to combine complementary diagnostic tools on a case-by-case basis, potentially reducing missed findings and improving robustness across chest X-ray datasets and clinical settings. It can also help researchers better understand tool failure patterns and design safer medical decision-support systems. Potential risks include inheriting biases or label noise from pretrained models and datasets, and overreliance on automated outputs if used without clinical oversight. Therefore, the system should be positioned as an assistive tool for clinicians rather than a replacement for medical professionals.

\section{Training Details}
\label{app:training}

\paragraph{Hyperparameters.}
Table~\ref{tab:hparams} reports the full GRPO training configuration.
Our implementation is based on VERL~0.7.1 with vLLM~0.11.0 as the
rollout backend.

\begin{table}[h]
\centering
\small
\setlength{\tabcolsep}{6pt}
\renewcommand{\arraystretch}{1.08}
\caption{GRPO training hyperparameters.}
\label{tab:hparams}
\begin{tabular}{@{}ll@{}}
\toprule
\textbf{Hyperparameter} & \textbf{Value} \\
\midrule
\multicolumn{2}{@{}l}{\textit{Model and data}} \\
\quad Base model                   & Qwen2.5-VL-7B-Instruct \\
\quad Train batch size             & $256$ \\
\quad Max prompt / response length & $1024$ / $2048$ \\
\midrule
\multicolumn{2}{@{}l}{\textit{Optimization}} \\
\quad Learning rate                & $1\times 10^{-6}$ (constant) \\
\quad PPO mini-batch size          & $256$ \\
\quad PPO micro-batch size / GPU   & $8$ \\
\quad PPO epochs                   & $1$ \\
\midrule
\multicolumn{2}{@{}l}{\textit{GRPO objective}} \\
\quad Advantage estimator          & GRPO \\
\quad Rollouts per prompt ($n$)    & $16$ \\
\quad Clip ratio ($\epsilon$)      & $0.2$ \\
\quad KL coefficient               & $1\times 10^{-3}$ \\
\quad KL type                      & Low-variance \\
\quad Entropy coefficient          & $1\times 10^{-3}$ \\
\quad KL in reward                 & False \\
\midrule
\multicolumn{2}{@{}l}{\textit{Rollout backend}} \\
\quad Rollout backend              & vLLM (async) \\
\quad Tensor parallel size         & $2$ \\
\quad GPU memory utilization       & $0.70$ \\
\midrule
\multicolumn{2}{@{}l}{\textit{Multi-turn tool use}} \\
\quad Max assistant turns          & $4$ \\
\quad Max parallel tool calls      & $6$ \\
\quad Number of tools              & $6$ \\
\midrule
\multicolumn{2}{@{}l}{\textit{Schedule and hardware}} \\
\quad Total epochs                 & $2$ \\
\quad Save / validation frequency  & Every $5$ steps \\
\quad Hardware                     & $8\times$ NVIDIA H200 141\,GB \\
\bottomrule
\end{tabular}
\end{table}

\paragraph{System Prompt.}
The following system prompt is prepended to every training and evaluation
sample.

\begin{quote}
\itshape
You are a radiology AI expert for chest X-ray diagnosis. You have access
to 6 diagnostic tools with different strengths. Each tool returns a score for the queried condition. Query MULTIPLE tools
and compare their results for a more reliable diagnosis. Tools may
disagree---use the disagreement to assess diagnostic uncertainty. After
consulting the tools, provide your final diagnosis probability ($0.0$ to
$1.0$) inside \texttt{<answer></answer>} tags.
\end{quote}

\section{LLM Usage Statement}
\label{app:llm_usage}

During the preparation of this manuscript, large language models were used only
to polish the writing and provide suggestions on figure presentation. The
authors remain responsible for all technical content, experimental results, and
scientific conclusions.


\end{document}